\def\eqref#1{equation~\ref{#1}}
\def\1{\bm{1}}
\DeclareMathAlphabet{\mathsfit}{\encodingdefault}{\sfdefault}{m}{sl}
\SetMathAlphabet{\mathsfit}{bold}{\encodingdefault}{\sfdefault}{bx}{n}
\newcommand{\E}{\mathbb{E}}
\DeclareMathOperator{\sign}{sign}
\newtheorem{theorem}{Theorem}[section]
\newtheorem{proposition}[theorem]{Proposition}
\newenvironment{proof}{\paragraph{Proof:}}{\hfill$\square$}
\newcommand{\fst}{1\textsuperscript{st}}
\newcommand{\snd}{2\textsuperscript{nd}}
\newcommand{\trd}{3\textsuperscript{rd}}
\definecolor{green}{RGB}{31,198,0}
\definecolor{airforceblue}{rgb}{0.36, 0.54, 0.66}
\definecolor{amaranth}{rgb}{0.9, 0.17, 0.31}
\title{
How deep convolutional neural networks lose spatial information with training
}
\author{Umberto M. Tomasini \thanks{Equal contribution.} \:, \:Leonardo Petrini $^*$, \:Francesco Cagnetta, \:Matthieu Wyart \\
Institute of Physics\\
\'Ecole Polytechnique F\'ed\'erale de Lausanne\\
\texttt{name.surname@epfl.ch}
}
\begin{document}

\maketitle

\begin{abstract}
A central question of machine learning is how deep nets  manage to learn tasks in high dimensions. An appealing hypothesis is that they achieve this feat by building a representation of the data where information  irrelevant to the task is lost. For image datasets, this view is supported by the observation that after (and not before) training,  the neural representation becomes less and less sensitive to diffeomorphisms acting on images as the signal propagates through the net. This loss of sensitivity correlates with performance, and surprisingly correlates with a {\it gain} of sensitivity to white noise acquired during training. These facts are unexplained, and as we demonstrate still hold when white noise is added to the images of the training set. Here, we \textit{(i)} show empirically for various architectures that stability to image diffeomorphisms is achieved {by both spatial and channel pooling}, \textit{(ii)} introduce a model scale-detection task {which reproduces our empirical observations on spatial pooling} and \textit{(iii)} compute {analitically} how the sensitivity to diffeomorphisms and noise scales with depth {due to spatial pooling}. The scalings are found to depend on the presence of strides in the net architecture. We find that the increased sensitivity to noise is due to the perturbing noise piling up during pooling, after being rectified by ReLU units.
\end{abstract}

\section{Introduction}
\label{sec:intro}

Deep learning algorithms can be successfully trained to solve a large variety of tasks~\citep{amodei2016deep, huval2015empirical,mnih2013playing, shi2016end, silver2017mastering}, often revolving around classifying data in high-dimensional spaces. If there was little structure in the data, the learning procedure would be cursed by the dimension of these spaces: achieving good performances would require an astronomical number of training data~\citep{luxburg2004distance}. Consequently, real datasets must have a specific internal structure that can be learned with fewer examples. It has been then hypothesized that the effectiveness of deep learning lies in its ability of building `good' representations of this internal structure, which are insensitive to aspects of the data not related to the task~\citep{ansuini2019intrinsic, shwartz2017opening, recanatesi2019dimensionality}, thus effectively reducing the dimensionality of the problem. 
 
In the context of image classification, \citet{bruna2013invariant,mallat2016understanding} proposed that neural networks lose irrelevant information by learning representations that are insensitive to small deformations of the input, also called diffeomorphisms. This idea was tested in modern deep networks by~\citet{petrini_relative_2021}, who introduced the following measures
\begin{equation}\label{df}
         D_f = \frac{\E_{x,\tau}\|f(\tau(x))-f(x)\|^2}{\E_{x_1,x_2}\| f(x_1)-f(x_2)\|^2},  \qquad G_f = \frac{\E_{x,\eta}\| f(x+\eta)-f(x)\|^2}{\E_{x_1,x_2}\| f(x_1)-f(x_2)\|^2}, \qquad R_f = \frac{D_f}{G_f},
\end{equation}
to probe the sensitivity of a function $f$---either the output or an internal representation of a trained network---to random diffeomorphisms $\tau$ of $x$ (see example in \autoref{fig:corr_table}, left), to large white noise perturbations $\eta$ of magnitude $\|\tau(x)-x\|$, and in relative terms, respectively. Here the input images $x$, $x_1$ and $x_2$ are sampled uniformly from the test set. In particular, the test error of trained networks is correlated with $D_f$ when $f$ is the network output. Less intuitively, the test error is anti-correlated with the sensitivity to white noise $G_f$. Overall, it is the relative sensitivity $R_f$ which correlates best with the error (\autoref{fig:corr_table}, middle). This correlation is learned over training---as it is not seen at initialization---and built up layer by layer~\citep{petrini_relative_2021}. These phenomena are not simply due to benchmark data being noiseless, as they persist when input images are corrupted by some small noise (\autoref{fig:corr_table}, right).

Operations that grant insensitivity to diffeomorphisms in a deep network have been identified previously (e.g. \citet{goodfellow_deep_2016}, section~9.3, sketched in~\autoref{fig:poolings}). The first, \emph{spatial} pooling, integrates local patches within the image, thus losing the exact location of its features. The second, \textit{channel} pooling, {requires the interaction} of different channels, {which allows} the network to become invariant to any local transformation by properly learning filters that are transformed versions of one another. However, it is not clear whether these operations are actually learned by deep networks and how they conspire in building good representations. Here we tackle this question by unveiling empirically the emergence of spatial and channel pooling, and disentangling their role. Below is a detailed list of our contributions.

\begin{figure}
    \centering
    \includegraphics[width=\textwidth]{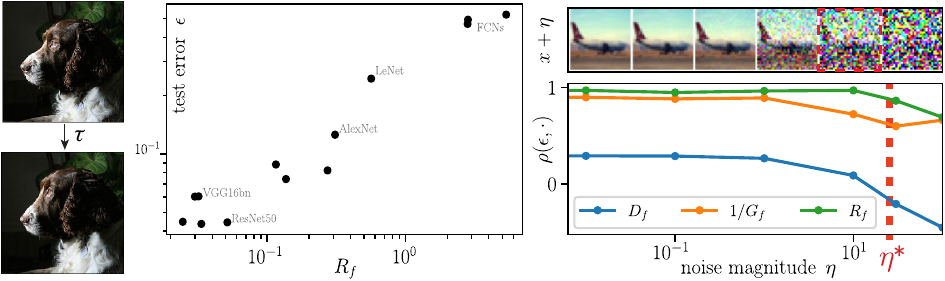}
    \caption{Left: example of a random diffeomorphism $\tau$ applied to an image. Center: test error vs relative sensitivity to diffeomorphisms of the predictor for a set of networks trained on CIFAR10, adapted from \cite{petrini_relative_2021}. Right: 
    Correlation coefficient between test error $\epsilon$ and $D_f$, $G_f$ and $R_f$ when training different architectures on noisy CIFAR10, $\rho(\epsilon, X) = {\mathrm{Cov}(\log \epsilon, \log X)} /{\sqrt{\mathrm{Var}(\log \epsilon)\mathrm{Var}(\log X)}}$. Increasing noise magnitudes are shown on the $x$-axis and $\eta^* = \E_{\tau, x}\|\tau(x) - x\|^2$ is the one used for the computation of $G_f$. Samples of a noisy CIFAR10 datum are shown on top. Notice that $D_f$ and particularly $R_f$ are positively correlated with $\epsilon$, whilst $G_f$ is negatively correlated with $\epsilon$. The corresponding scatter plots are in~\autoref{fig:terr_Rf_noise_CIFAR} (appendix).}
    \label{fig:corr_table}
\end{figure}

\begin{figure}
    \centering
    \vspace*{-.2cm}
    \includegraphics[width=\textwidth]{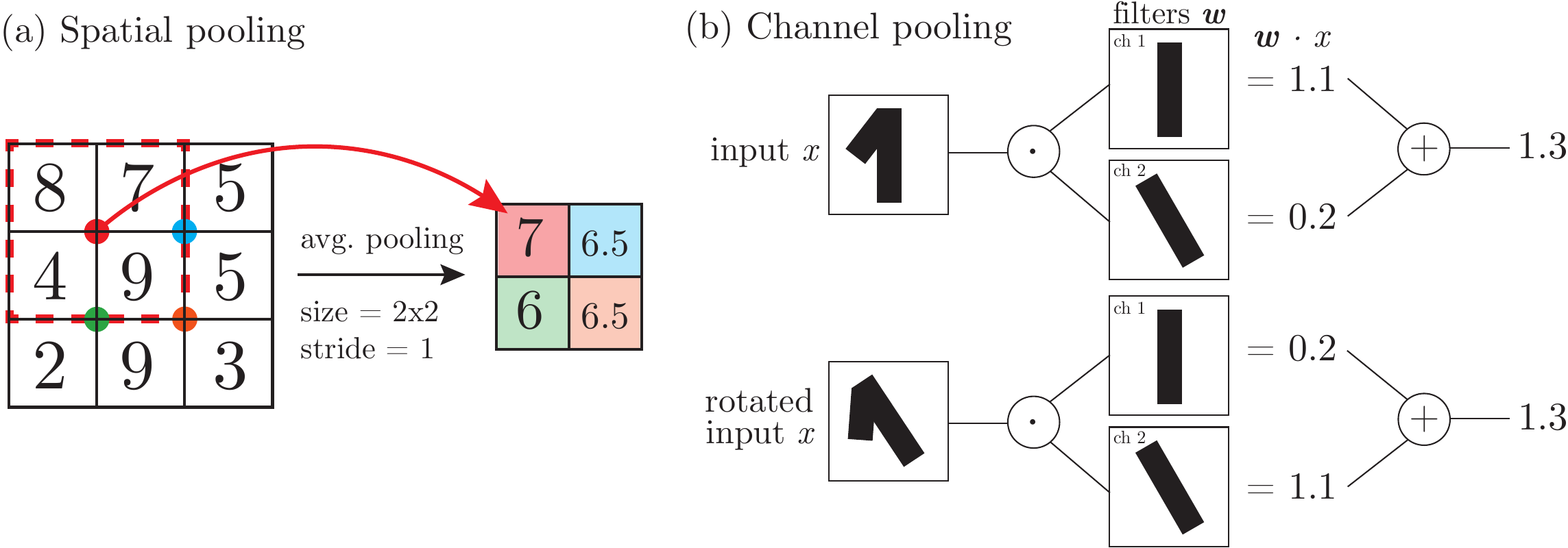}
    \vspace*{-.5cm}
    \caption{Spatial vs. channel pooling. (a) Spatial average pooling (size 2x2, stride 1) computed on a representation of size 3x3. One can notice that nearby pixel variations are smaller after pooling. (b) If the filters of different channels are identical up to e.g. a rotation of angle $\theta$, then, averaging the output of the application of such filters makes the result invariant to input rotations of $\theta$. {This averaging is an example of channel pooling.}}
    \label{fig:poolings}
    \vspace*{-.2cm}
\end{figure}

\subsection{Our contributions}

\begin{itemize}
    \item 
    We disentangle the role of spatial and channel pooling within deep networks trained on CIFAR10 (\autoref{sec:new_empirical}). {More specifically, our experiments reveal the significant contribution of spatial pooling in decreasing the sensitivity to diffeomorphisms.}

    \item  {In order to isolate the contribution of spatial pooling and quantify its relation with the sensitivities to diffeomorphism and noise,} we introduce idealized scale-detection tasks (\autoref{sec:simple_model}). {In these tasks,} data are made of two active pixels and classified according to their distance. We find the same correlations between test error and sensitivities of trained networks as found in \cite{petrini_relative_2021}. In addition, the neural networks which perform the best on real data tend to be the best on these tasks. 
    
    \item We theoretically analyze how simple CNNs, made by stacking convolutional layers with filter size $F$ and stride $s$, learn these tasks (\autoref{sec:theory}). We find that the trained networks perform spatial pooling for most of its layers. We show and verify empirically that the sensitivities $D_k$ and $G_k$ of the $k$-th hidden layer follow $G_{k}\sim A_k$ and $D_{k}\sim A_k^{-\alpha_s}$, where $A_k$ is the effective receptive field size and $\alpha_s = 2$
    if there is no stride, $\alpha_s = 1$ otherwise.

\end{itemize}
The code and details for reproducing experiments are available online at
\href{https://github.com/leonardopetrini/relativestability/blob/main/experiments_ICLR23.md}{github.com/leonardopetrini/relativestability/experiments\_ICLR23.md}.

\subsection{Related work}

In the neuroscience literature, the understanding of the relevance of pooling in building invariant representations dates back to the pioneering work of~\cite{hubel_receptive_1962}. By studying the cat visual cortex, they identified two different kinds of neurons: simple cells responding to e.g. edges at specific angles and complex cells that \textit{pool} the response of simple cells and detect edges regardless of their position or orientation in the receptive field. More recent accounts of the importance of learning invariant representations in the visual cortex can be found in \cite{niyogi_incorporating_1998, anselmi_unsupervised_2016, poggio_visual_2016}.

In the context of artificial neural networks, layers jointly performing spatial pooling and strides have been introduced with the early CNNs of~\cite{lecun_gradient-based_1998}, following the intuition that local averaging and subsampling would reduce the sensitivity to small input shifts. \cite{ruderman_pooling_2018} investigated the role of spatial pooling and showed empirically that networks with and without pooling layers converge to similar deformation stability, suggesting that spatial pooling can be learned in deep networks. In our work, we further expand in this direction by jointly studying diffeomorphisms and noise stability and proposing a theory of spatial pooling for a simple task.

The depth-wise loss of irrelevant information in deep networks has been investigated by means of the information bottleneck framework~\citep{shwartz2017opening, saxe2019information} and the intrinsic dimension of the networks internal representations \citep{ansuini2019intrinsic, recanatesi2019dimensionality}. However, these works do not specify what is the irrelevant information to be disregarded, nor the mechanisms involved in such a process.

The stability of trained networks to noise is extensively studied in the context of adversarial robustness~\citep{fawzi_manitest_2015, kanbak_geometric_2018, alcorn_strike_2019, alaifari_adef_2018, athalye_synthesizing_2018, xiao_spatially_2018, engstrom_exploring_2019}.
Notice that our work differs from this literature by the fact that we consider typical perturbations instead of worst-case ones.

\section{Empirical observations on real data}
\label{sec:new_empirical}

In this section we analyze the parameters of deep CNNs trained on CIFAR10 and ImageNet, so as to understand how they build representations insensitive to diffeomorphisms (details of the experiments in~\autoref{app:experiments}).
{The analysis builds on two premises, the first being the assumption that insensitivity is built layer by layer in the network, as shown in~\autoref{fig:Rf_depth}. Hence, we focus on how each of the layers in a deep network contribute towards creating an insensitive representation. More specifically, let us denote with $f_k(x)$ the internal representation of an input $x$ at the $k$-th layer of the network. The entries of $f_k$ have three indices, one for the channel $c$ and two for the spatial location $(i,j)$. The relation between $f_k$ and $f_{k-1}$ is the following,
\begin{equation}
    \left[f_{k}(x)\right]_{c; i,j} = \phi\left(b_c^k +\displaystyle\sum_{c'=1}^{H_{k-1}} \bm{w}_{c,c'}^k \cdot \bm{p}_{i,j}\left(\left[f_{k-1}(x)\right]_{c'}\right)\right) \quad \forall \: c = 1, \dots, H_k,
    \label{eq:convolutional_layer}
\end{equation}
where: $H_k$ denotes the number of channels at the $k$-th layer; $b_c^k$ and $\bm w_{c,c'}^k$ the biases and \textit{filters} of the $k$-th layer; each filter $\bm{w}_{c,c'}^k$ is a $F\times F$ matrix with $F$ the filter size; $\bm{p}_{i,j}\left(\left[f_{k-1}(x)\right]_{c'}\right)$ denotes a $F \times F$-dimensional patch of $\left[f_{k-1}(x)\right]_{c'}$ centered at $(i,j)$; $\phi$ the activation function. The second premise is that a general diffeomorphism can be represented as a displacement field over the image, which indicates how each pixel moves in the transformation. Locally, this displacement field can be decomposed into a constant term and a linear part: the former corresponds to local translations, the latter to stretchings, rotations and shears.\footnote{The displacement field around a pixel $(u_0, v_0)$ is approximated as $\tau(u, v) \simeq \tau(u_0, v_0) + J(u_0, v_0)[u - u_0, v - v_0]^T$, where $\tau(u_0, v_0)$ corresponds to translations and $J$ is the Jacobian matrix of $\tau$ whose trace, antisymmetric and symmetric traceless parts correspond to stretchings, rotations and shears, respectively.}}

{\paragraph{Invariance to translations via spatial pooling.} Due to weight sharing, i.e. the fact that the same filter $\bm{w}^k_{c,c'}$ is applied to all the local patches $(i,j)$ of the representation, the output of a convolutional layer is \emph{equivariant} to translations by construction: a shift of the input is equivalent to a shift of the output. To achieve an \emph{invariant} representation it suffices to sum up the spatial entries of $f_k$---an operation called pooling in CNNs, we refer to it as \emph{spatial} pooling to stress that the sum runs over the spatial indices of the representation. Even if there are no pooling layers at initialization, they can be realized by having homogeneous filters, i.e. all the $F\times F$ entries of $\bm{w}^{k+1}_{c,c'}$ are the same. Therefore, the closer the filters are to the homogeneous filter, the more they decrease the sensitivity of the representation to local translations.\looseness=-1}

{\paragraph{Invariance to other transformations via channel pooling.}
The example of translations shows that building invariance can be performed by constructing an equivariant representation, and then pooling it. 
Invariance can also be built by pooling {\it across} channels. A two-channel example  is shown Fig. 2, panel (b), where the filter of the second channel is built so as to produce the same output as the first channel when applied to a rotated input. The same idea can be applied more generally, e.g. to the other components of diffeomorphisms—such as local stretchings  and shears. Below, we refer generically to any operation that build invariance to diffeomorphisms by  assembling distinct channels  as {\it channel pooling}. \looseness=-2
}


\paragraph{Disentangling spatial and channel pooling.} The relative sensitivity to diffeomorphisms $R_k$ of the $k$-th layer representation $f_k$ decreases after each layer, as shown in \autoref{fig:Rf_depth}. {This implies} that spatial {or} channel pooling are carried out along the whole network. To disentangle their contribution we perform the following experiment: shuffle at random the connections between channels of successive convolutional layers, while keeping the weights unaltered. Channel shuffling amounts to randomly permuting the values of $c, c'$ in \autoref{eq:convolutional_layer}, therefore it breaks any channel pooling while not affecting single filters. The values of $R_k$ for deep networks after channel shuffling are reported in \autoref{fig:Rf_depth} as dashed lines and compared with the original values of $R_k$ in full lines. {If only spatial pooling was present in the network, then the two curves would overlap. Conversely, if the decrease in $R_k$ was all due to the interactions between channels, then the shuffled curves should be constant. Given that neither of these scenarios arises, we conclude that both kinds of pooling are being performed.}

\paragraph{Emergence of spatial pooling after training.}
{To bolster the evidence for the presence of spatial pooling, we analyze the filters of trained networks.} Since spatial pooling can be built by having homogeneous filters, we test for its presence by looking at the frequency content of learned filters $\bm w_{i, j}^{k}$. In particular, we consider the average squared projection of filters onto ``Fourier modes" $\{\Psi_l\}_{l=1, \dots, F^2}$, taken as the eigenvectors of the discrete Laplace operator on the $F\times F$ filter grid. The square projections averaged over channels read
\begin{equation}
    \gamma_{k, l} = {\frac{1}{H_{k-1}H_{k}} \sum_{c=1}^{H_{k}}\sum_{c'=1}^{H_{k-1}} \left[\Psi_l\cdot \bm w_{c, c'}^{k}\right]^2},
    \label{laplacian_projection}
\end{equation}
and are shown in~\autoref{fig:filters_laplacian}, \fst and \snd~row. {When} training a deep network such as VGG11 (with and without batch-norm) \citep{simonyan_very_2015} on CIFAR10, filters of layers 2 to {6 become low-frequency with training, while layers 1, 7, 8 do not. {Accordingly}, larger gaps between dashed and full lines in \autoref{fig:Rf_depth} (right) open at layer 1, 7, 8:  reduction in sensitivity is not due to spatial pooling in these layers. Moreover, the fact that the two dashed curves overlap is consistent with the frequency content of filters being the same for the two architectures after training. In the case of ImageNet, filters at all layers become low-frequency, except for $k=1$.}

\begin{figure}[ht]
\begin{center}
\hspace*{-.35cm}
\centerline{\includegraphics[width=.97\columnwidth]{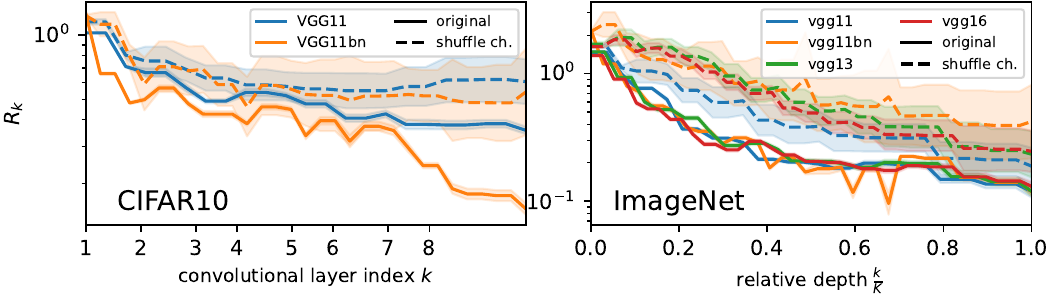}}
\caption{Relative sensitivity $R_k$ as a function of depth for VGG architectures trained on CIFAR10 (left) and ImageNet (right). Full lines refer to the original networks, dashed lines to the ones with shuffled channels. $K$ is the total depth of the networks. Experiments with different architectures are reported in~\autoref{app:additional}.}
\label{fig:Rf_depth}
\end{center}
\vskip -0.07in
\end{figure}
\begin{figure}[ht]
\begin{center}
\centerline{\includegraphics[width=\columnwidth]{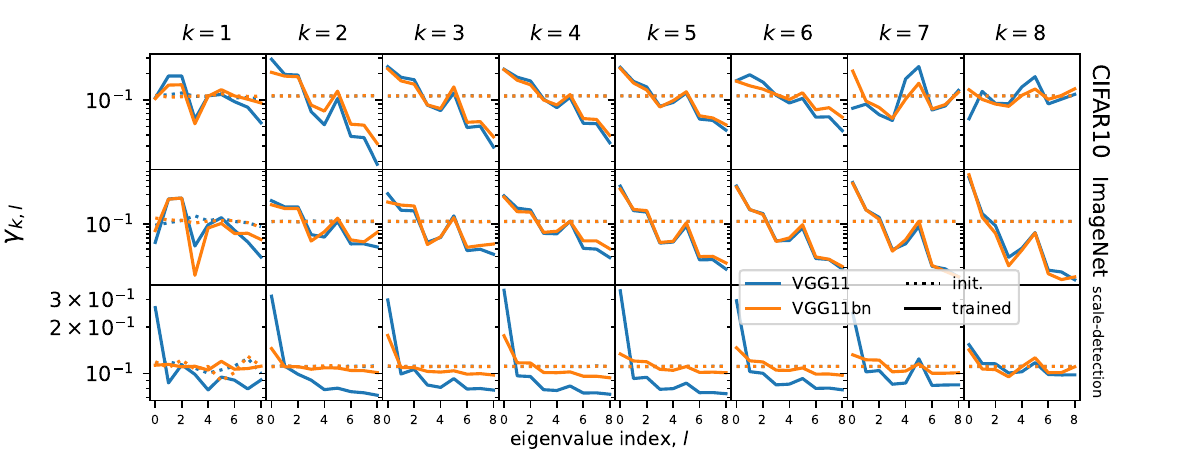}}
\caption{{Projections of the network filters {for VGG11 and VGG11bn} 
onto the 9 eigenvectors of the $(3\times3)$-grid Laplacian when training on CIFAR10 (\fst~row), ImageNet, (\snd~row) and the scale-detection task (\trd~row): dotted and full lines correspond to initialization and trained networks, respectively. The $x$-axis reports low to high frequencies from left to right. Deeper layers are reported in rightmost panels. Low-frequency modes are the dominant components in layers 2-6 when training on CIFAR10, in layers 2-8 for ImageNet. The first (constant) mode has most of the power throughout the network for scale-detection task 1. An aggregate measure of the spatial frequency content of filters is reported in~\autoref{app:additional}, \autoref{fig:spatial_freq_content}.}
}
\label{fig:filters_laplacian}
\end{center}
\vskip -0.2in
\end{figure}

\section{Simple scale-detection tasks capture real-data observations}
\label{sec:simple_model}

To sum up, the empirical evidence presented in~\autoref{sec:new_empirical} indicates that \emph{(i)} the generalization performance of deep CNNs correlates with their insensitivity to diffeomorphisms and sensitivity to Gaussian noise (\autoref{fig:corr_table}); \emph{(ii)} deep CNNs build their sensitivities layer by layer via spatial and channel pooling. {We introduce now two idealized scale-detection tasks where the phenomena \textit{(i)} and\textit{ (ii)} emerge again, and we can isolate the contribution of spatial pooling. Given the simpler structure of these tasks with respect to real data, we can understand quantitatively how spatial pooling builds up insensitivity to diffeomorphisms and sensitivity to Gaussian noise, as we show in~\autoref{sec:theory}.}

\paragraph{Definition of scale-detection tasks.} Consider input images $x$ consisting of two active pixels on an empty background.
\begin{itemize}
    \item[\textbf{Task 1:}] Inputs are classified by comparing the euclidean distance $d$ between the two active pixels and some \emph{characteristic scale} $\xi$, as in~\autoref{fig:2pixels_illustration}, left. Namely, the label is $y\,{=}\,\sign{\left(\xi-d\right)}$.
\end{itemize}
Notice that a small diffeomorphism of such images corresponds to a small displacement of the active pixels. {Specifically, each of the active pixels is moved to either of its neighboring pixels or left in its original position with equal probability.\footnote{We fix the length of these displacements to 1 pixel because\textit{ (i)} is the smallest value that prevents the use of pixel interpolation, which would make one active pixel an extended object \textit{(ii)} allows for the analysis of \autoref{sec:theory}.\looseness-3}}
By introducing a gap $g$ such that $d \in [\xi - \sfrac{g}{2}, \xi + \sfrac{g}{2}]$, task 1 becomes invariant to displacements of size smaller than $g$. Therefore, we expect that a neural network trained on task 1 will lose any information on the exact location of the active pixels within the image, thus becoming insensitive to diffeomorphisms. {Intuitively}, spatial pooling up to the scale $\xi$ is the most direct mean to achieve such insensitivity. The result of the integration depends on whether none, one or both the active pixels lie within the pooling window, thus it is still informative of the task. {We will show empirically that this is indeed the solution reached by trained CNNs.}
\begin{itemize}
    \item[\textbf{Task 2:}] Inputs are partitioned into nonoverlapping patches of size $\xi$, as in~\autoref{fig:2pixels_illustration}, right. The label $y$ is $+1$ if the active pixels fall within the same patch, $-1$ otherwise.
\end{itemize}
In task 2, the irrelevant information is the location of the pixels within each of the non-overlapping patches. The simplest means to lose such information requires to couple spatial pooling with a stride of the size of the pooling window itself.
\begin{figure}
    \centering
    \hspace*{-0.7cm}
    \includegraphics[width=\textwidth]{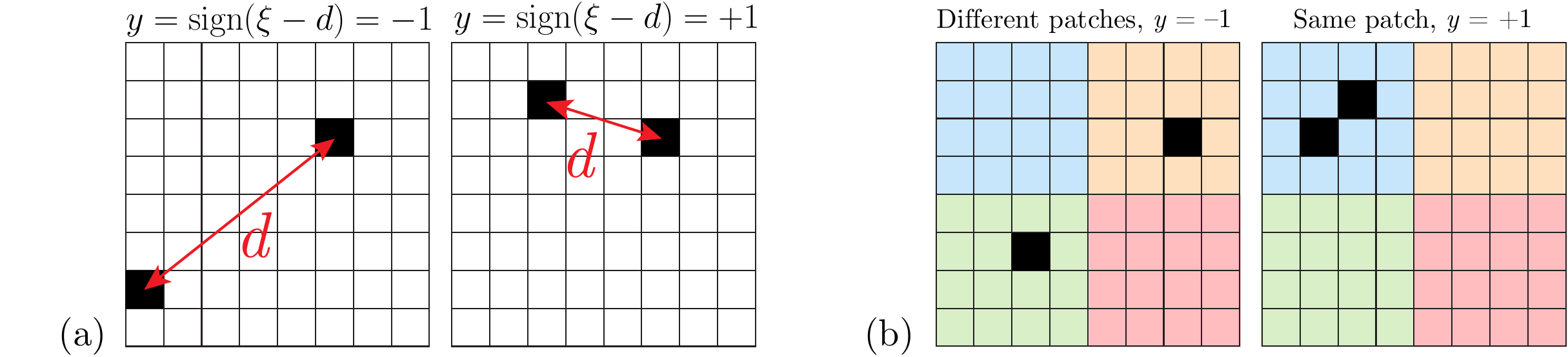}
    \caption{Example inputs for the scale-detection tasks. Task 1 (a): the label depends on whether the euclidean distance $d$ is larger (left) or smaller (right) than the characteristic scale $\xi$. Task 2 (b): the label depends on whether the active pixels belong to the same patch of size $\xi$ (right) or not (left)---patches are shown in different colors.
    }
    \label{fig:2pixels_illustration}
\end{figure}

\paragraph{Same phenomenology as in real image datasets.} Although these scale-detection tasks are much simpler than standard benchmark datasets, deep networks trained on task 1 display the same phenomenology highlighted in~\autoref{sec:new_empirical} for networks trained on CIFAR10 and ImageNet. First, the test error is positively correlated with the sensitivity to diffeomorphisms of the network predictor (\autoref{fig:terr_Rf_twopixels}, left panel, in~\autoref{app:additional}) and negatively correlated with its sensitivity to Gaussian noise (middle panel) for a whole range of architectures. As a result, the error correlates well with the relative sensitivity $R_f$ (right panel). Secondly, the internal representations of trained networks $f_k$ become progressively insensitive to diffeomorphisms and sensitive to Gaussian noise through the layers, as shown in~\autoref{fig:Rf_depth_twopixels} of~\autoref{app:additional}. Importantly, the curves relating sensitivities to the relative depth remain essentially unaltered if the channels of the networks are shuffled (shown as dashed lines in~\autoref{fig:Rf_depth_twopixels}). {We conclude that, on the one hand channel pooling is negligible, and, on the other hand, all channels are approximately equal to the mean channel. Finally, direct inspection of the filters (\autoref{fig:filters_laplacian}, bottom row) shows that the 0-frequency component grows much larger than the others over training for layers 1-7, which are the layers where $R_k$ decreases the most in \autoref{fig:Rf_depth_twopixels}.} Filters are thus becoming nearly homogeneous, which means that the convolutional layers become effectively pooling layers.\looseness-1

\section{Theoretical analysis of sensitivities in scale-detection tasks}
\label{sec:theory}
We now provide a scaling analysis of the sensitivities to diffeomorphisms and noise in the internal representations of simple CNNs trained on the scale-detection tasks of~\autoref{sec:simple_model}. {It allows to quantitatively understand how spatial pooling makes the internal representations of the network progressively more insensitive to diffeomorphisms and sensitive to Gaussian noise.}
\vspace{-.6em}
\paragraph{Setup.} We consider simple CNNs made by stacking $\tilde{K}$ identical convolutional layers with generic filter size $F$, stride $s\,{=}\,1$ or $F$ and ReLU activation function $\phi(x)\,{=}\,\text{max}(0,x)$. In particular, we train CNNs with stride $1$ on task 1 and CNNs with stride $F$ on task 2. For the sake of simplicity, we consider the one-dimensional version of the scale-detection tasks, but our analysis carries unaltered to the two-dimensional case. Thus, input images are sequences $x=(x_i)_{i=1,...,L}$ of $L$ pixels, where $x_i\,{=}\,0$ for all pixels except two. For the active pixels $x_i\,{=}\,\sqrt{L/2}$, so that all input images have $\| x\|^2\,{=}\,L$. We will also consider single-pixel data $\delta_j\,{=}\,(\delta_{j,i})_{i=1,\dots,L}$. If the active pixels in $x$ are the $i$-th and the $j$-th, then $x\,{=}\,\sqrt{L/2}\left(\delta_i\,{+}\,\delta_j\right)$. For each layer $k$, the internal representation $f_k(x)$ of the trained network is defined as in \autoref{eq:convolutional_layer}. The \emph{receptive field} of the $k$-th layer is the number of input pixels contributing to each component of $f_k(x)$. We define the \emph{effective} receptive field $A_k$ as the typical size of the representation of a single-pixel input, $f_k(\delta_i)$, as illustrated in red in~\autoref{fig:2pixels_activations}. We denote the sensitivities of the $k$-th layer representation with a subscript $k$ ($D_k$ for diffeomorphisms, $G_k$ for noise, $R_k$ for relative).
\vspace{-.6em}
\paragraph{Assumptions. } All our results are based on the assumption that the first few layers of the trained network behave effectively as a single channel with a homogeneous positive filter and no bias. The equivalence of all the channels with their mean is supported by
~\autoref{fig:Rf_depth_twopixels}, which shows how shuffling channels does not affect the internal representations of VGGs. In addition,~\autoref{fig:filters_laplacian} (bottom row) shows that the mean filters of the first few layers are nearly homogeneous. We set the homogeneous value of each filter so as to keep the norm of representations constant over layers. {Moreover, we implement a deformation of the input $x$ of our scale-detection tasks as a random displacement of each active pixel at either left or wight with probability 1/2.}

\subsection{task 1, stride 1}
\label{sec:t1s1}

\begin{figure}
    \centering
    \includegraphics[width=\textwidth]{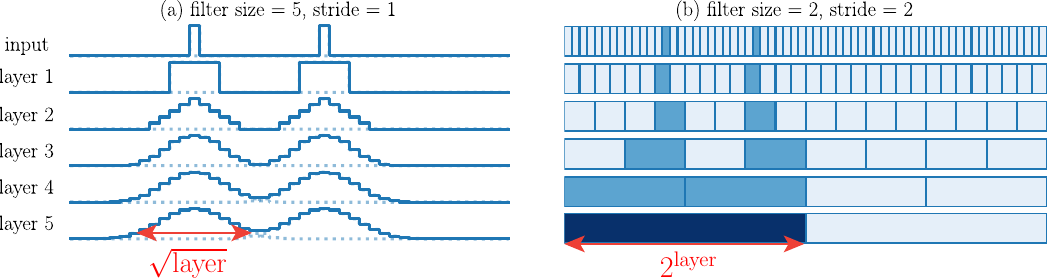}
    \vspace*{-.33cm}
    \caption{Hidden layers representations of simple CNNs for a scale-detection input for stride $s\,{=}\,1$ and filter size $F\,{=}\,5$ (left) and $s\,{=}\,F\,{=}\,2$ (right) when having homogeneous filters at every layer. The effective receptive field size of the last layer in the two different cases is shown in red. (Left) every active pixel in the input becomes a Gaussian profile whose width increases throughout the network. (Right) every neuron in layer $k$ has activity equal to the number of active pixels which are present in its receptive field of width $2^k$. The dark blue in the last layer indicates that there are 2 active pixels in its receptive field, while the lighter blue of the precedent layers indicates that there is just 1.}
    \label{fig:2pixels_activations}
\end{figure}

For a CNN with stride $1$, under the homogeneous filter assumption, the size of the effective receptive field $A_k$ grows as $\sqrt{k}$. A detailed proof is presented in~\autoref{app:proof} and~\autoref{fig:2pixels_activations}, left panel, shows an illustration of the process. Intuitively, applying a homogeneous filter to a representation is equivalent to making each pixel diffuse, i.e. distributing its intensity uniformly over a neighborhood of size $F$. With a single-pixel input $\delta_i$, the effective receptive field of the $k$-th layer $f_k(\delta_i)$ is equivalent to a $k$-step diffusion of the pixel, thus it approaches a Gaussian distribution of standard deviation $\sqrt{k}$ centered at $i$. The size $A_k$ is the standard deviation, thus $A_k\sim\sqrt{k}$. {The proof we present in~\autoref{app:proof} requires large depth $\tilde{K}\gg1$ and large image width $L\gg F\tilde{K}^{1/2}$ and the empirical studies of~\autoref{sec:simple_model} satisfy these contraints 
($F\sim3$, $L\sim 32$ and $\tilde{K}\sim 10$).}

{We remark that at initialization, $f_k(x)$ behave, in the limit of large number of channels and width (and small bias), as Gaussian random fields with correlation matrix $\mathbb{E}\left[  f_k(x) f_k(y) \right] \approx \delta(x-y)$, with $\delta$ the Dirac delta ~\citep{Schoenholz2017, Xiao2018}. This spiky correlation matrix implies that for any perturbation $y=x+\varepsilon$, the representation $f_k(y)$ changes with respect to $f_k(x)$ independently on $\varepsilon$. This behavior is remarkably different to the smooth case achieved by the diffusion, after training. Consequently, both $D_k$ and $G_k$ are constant with respect to $k$ at initialization . This is consistent with the observations reported in \autoref{fig:sens_simple}.}

\paragraph{Sensitivity to diffeomorphisms.} Let $i$ and $j$ denote the active pixels locations, so that $x\propto \delta_i+\delta_j$. Since both the elements of the inputs and those of the filters are non-negative, the presence of ReLU nonlinearities is irrelevant and the first few hidden layers are effectively linear layers. Hence the representations are linear in the input, so that $f_k(x)\,{=}\, f_k(\delta_i + \delta_j)\,{=}\, f_k(\delta_i)+f_k(\delta_j)$. In addition, since the effect of a diffeomorphism is just a $1$-pixel translation of the representation irrespective of the original positions of the pixels, the normalized sensitivity $D_{k}$ can be approximated as follows
\begin{equation}
    D_{k}\sim \frac{\|f_k(\delta_{i+1})-f_k(\delta_i)\|^2_2}{\|f_k(\delta_i)\|^2_2}.
    \label{dft1}
\end{equation}
The denominator in \autoref{dft1} is the squared norm of a Gaussian distribution of width $\sqrt{k}$, $\|f_k(v_i)\|_2^2\sim k^{-1/2}$. The numerator compares $f_k$ with a small translation of itself, thus it can be approximated by the squared norm of the derivative of the Gaussian distribution, $\|f_k(\delta_{i+1})-f_k(\delta_i)\|^2_2 \sim k^{-3/2}$. Consequently, we have
\begin{equation}
    D_{k}\sim k^{-1}\sim A_k^{-2}.
    \label{dft1_pred}
\end{equation}

\paragraph{Sensitivity to Gaussian noise.} To analyze $G_{k}$ one must take into account the rectifying action of ReLU, which sets all the negative elements of its input to zero. The first ReLU is applied after the first homogeneous filters, thus the zero-mean noise is superimposed on a patch of $F$ active pixels. Outside such a patch, only positive noise terms survive. Within the patch, being summed to a positive background, also negative terms can survive the rectification of ReLU.  Nevertheless, if the size of the image is much larger than the filter size, the contribution from active pixels to $G_k$ is negligible and we can approximate the difference between noisy and original representations $f_1(x+\eta)-f_1(x)$ with the rectified noise $\phi(\eta)$. After the first layer, the representations consist of non-negative numbers, thus we can forget again the ReLU and write
\begin{equation}
    G_{k}\sim \frac{\mathbb{E}_{\eta} \|f_k(\phi(\eta))\|^2_2}{\|f_k(\delta_i)\|^2_2}.
    \label{gft1}
\end{equation}
Repeated applications of homogeneous filters to the rectified noise $\phi(\eta)$ result again in a diffusion of the signal. Since $\phi(\eta)$ has different independent and identically distributed non-zero entries for different realizations of $\eta$, averaging over $\eta$ is equivalent to considering a homogeneous profile for $f_k(\phi(\eta))$. As a result, the numerator in~\autoref{gft1} is a constant independent of $k$. The denominator is the same as in~\autoref{dft1}, $\|f_k(\delta_i)\|_2^2\sim k^{-1/2}$, hence
\begin{equation}
    G_{k}\sim k^{1/2}\sim A_k,
    \label{gft1_pred}
\end{equation}
i.e. the sensitivity to Gaussian noise grows as the size of the effective receptive fields. From the ratio of \autoref{dft1_pred} and \autoref{gft1_pred}, we get $R_{k}\sim A_k^{-3}$.

\subsection{task 2, stride equal filter size}
\label{sec:t2s2}

When the stride $s$ equals to the filter size $F$ the number of pixels of the internal representations is reduced by a factor $F$ at each layer, thus $f_k$ consists of $L/F^k$ pixels. Meanwhile, the effective size of the receptive fields grows exponentially at the same rate: $A_k = F^k$ (see~\autoref{fig:2pixels_activations}, left for an illustration).

\paragraph{Sensitivity to diffeomorphisms.} For a given layer $k$, consider a partition of the input image into $L/F^k$ patches. Each pixel of $f_k$ only looks at one such patch and its intensity coincides with the number of active pixels within the patch. As a result, the only diffeomorphisms that change $f_k$ are those which move one of the active pixels from one patch to another. Since active pixels move by $1$, this can only occur if one of the active pixels was originally located at the border of a patch, which in turn occurs with probability $\sim 1/F^k$. In addition, the norm $\|f_k(\delta_i)\|_2$ at the denominator does not scale with $k$, so that
\begin{equation}
    D_{k} \sim F^{-k }\sim A_k^{-1}.
    \label{dft2_pred}
\end{equation}

\paragraph{Sensitivity to Gaussian noise.} Each pixel of $f_k$ looks at a patch of the input of size $F^k$, thus $f_k$ is affected by the sum of all the noises acting on such patch. Since these noises have been rectified by ReLU, by the Central Limit Theorem the sum scales as the number of summands $F_k$. Thus, the contribution of each pixel of $f_k$ to the numerator of $G_k$ scales as $(F^k)^2$. As there are $L/F^k$ pixels in $f_k$, one has\looseness-5
\begin{equation}
    G_{k}\sim (F^k)^2 \left(L/F^k\right)\sim F^k\sim A_k.
    \label{gft2_pred}
\end{equation}
Without rectification, the sum of $F^k$ independent noises would scale as the square root of the number of summands $F^k$, yielding a constant $G_{k}$. We conclude that the rectifying action of ReLU is crucial in building up sensitivity to noise. $R_{k}\sim A_k^{-2}$ follows from the ratio of \autoref{dft2_pred} and \autoref{gft2_pred}.
\subsection{comparing predictions with experiments}
\label{sec:pred_exp}
We test our scaling predictions (\autoref{dft1_pred} to~\autoref{gft2_pred}) in \autoref{fig:sens_simple}, for stride $1$ CNNs trained on task 1 and stride $F$ CNNs trained on task 2 in the top and bottom panels, respectively. {Notice that if all the filters at a given layer are replaced with their average, the behavior of the sensitivities as a function of depth does not change (compare solid and dotted blue curves in the figure). This confirms our assumption that all channels behave like the mean channel. In addition, Tables~\ref{tab:task1} and~\ref{tab:task2} show that the mean filters are approximately homogeneous.} Further details on the experiments are provided in~\autoref{app:experiments}.
\begin{figure}
\begin{center}
\centerline{\includegraphics[width=\columnwidth]{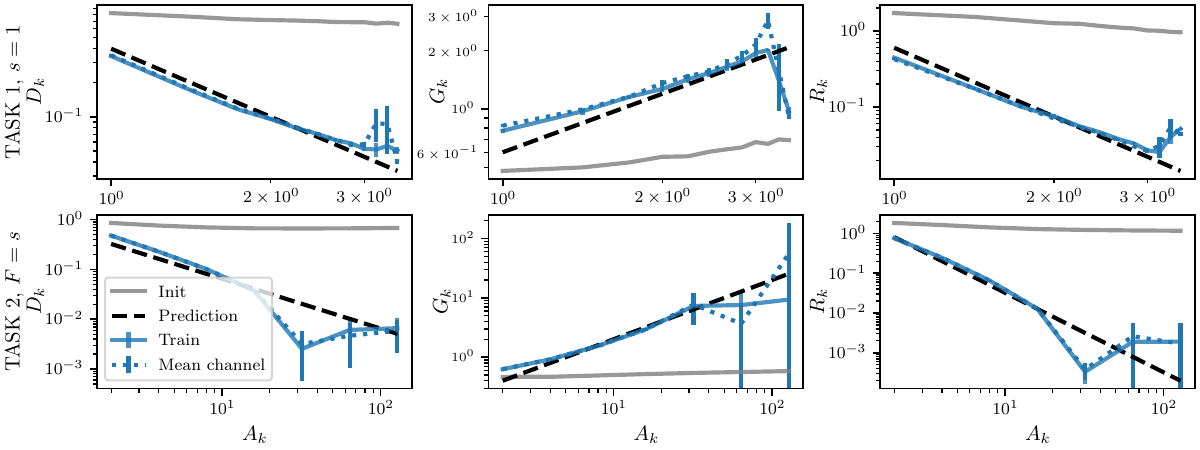}}
\vspace*{-.33cm}
\caption{Sensitivities of internal representations $f_k$ of simple CNNs against the $k$-th layer receptive field size $A_k$ for trained networks (solid blue) and at initialization (solid gray). The top row refers to task 1 with $s=1$ and $F=3$; the bottom row to task 2 with $F=s=2$. For a first large part of the network, the sensitivities obtained by replacing each layer with the mean channel (blue dotted) overlap with the original sensitivities. Predictions \autoref{dft1_pred}, \autoref{gft1_pred} for task 1 and \autoref{dft2_pred}, \autoref{gft2_pred} for task 2 are shown as black dashed lines.\looseness-1}
\label{fig:sens_simple}
\end{center}
\vskip -0.3in
\end{figure}
\section{Conclusion}
The meaning of an image often depends on sparse regions of the data, as evidenced by the fact that artists only need a small number of strokes to represent a visual scene. The exact locations of the features determining the image class are flexible, and indeed diffeomorphisms of limited magnitude leave the class unchanged. Here, we have shown that such an invariance is learned in deep networks by performing {spatial pooling and channel pooling.} Modern architectures learn these pooling operations---as they are not imposed by the architecture---suggesting that it is best to let the pooling adapt to the specific task considered. Interestingly, spatial pooling comes together with an increased sensitivity to random noise in the image, as  captured in simple artificial models of data. 

It is commonly believed that the best architectures are those that extract the features of the data most relevant for the task. The pooling operations studied here, which allow the network to forget the exact locations of these features, are probably more effective when features are better extracted. This point may be responsible for the observed strong correlations between the network performance and its stability to diffeomorphisms. Designing synthetic models of data whose features are combinatorial and stable to smooth transformations is very much needed to clarify this relationship, and ultimately understand how deep networks learn high-dimensional tasks with limited data.

\bibliography{main}
\bibliographystyle{iclr2023_conference}

\section*{Appendix}

\appendix
\section{Task 1, stride 1: proofs}
\label{app:proof}

In \autoref{sec:t1s1} we consider a simple CNN with stride $s\,{=}\,1$ and filter size $F$ trained on scale-detection task $1$. We fix the total depth of these networks to be $\tilde{K}$. We postulated in Sec. \ref{sec:theory} that this network displays a one-channel solution with homogeneous filter $[ 1/F,...,1/F ]$ and no bias. We can understand the representation $f_k(x)$ at layer $k$ of an input datum $x$ by using single-pixel inputs $\delta_i$. Let us recall that these inputs have all components to $0$ except the $i$-th, set to $1$. Then, we have that a general datum $x$ is given by $x\propto(\delta_i+\delta_j)$, where $i$ and $j$ are the locations of the active pixel in $x$. We have argued in the main text that the representation $f_k(\delta_i)$ is a Gaussian distribution with width $\sqrt{k}$. In this Appendix we prove this statement.

First, we observe that in this solution, since both the elements of the filters and those of the inputs are non-negative, the networks behaves effectively as a linear operator. In particular, each layer corresponds to the application of a $L\times L$ circulant matrix $M$, which is obtained by stacking all the $L$ shifts of the following row vector,
\begin{equation}
[\underbrace{1,1,...,1}_{F}\underbrace{0,0,0,...,0}_{L-F}].
\end{equation}
with periodic boundary conditions. The first row of such a matrix is fixed as follows. If $F$ is odd the patch of size $F$ is centered on the first entry of the first row, while if $F$ is even we choose to have $(F/2)$ ones at left of the first entry and $(F/2)-1$ at its right. The output $f_k$ of the layer $k$ is then the following: $f_k(\delta_i)=M^k \delta_i$.

\begin{proposition}
Let's consider the $L\times L$ matrix $M$ and a given $L$ vector $\delta_i$, as defined above. For odd $F\ge 3 $, in the limit of large depth $\tilde{K}\gg 1$ and large width $\tilde{L}\gg F\sqrt{\tilde{K}}$, we have that
\begin{equation}
    (M^k)_{ab}\delta_i = \frac{1}{2 \sqrt{\pi } \sqrt{D^{(1)}} \sqrt{k}}e^{-\frac{(a-i)^2}{4 D^{(1)} k}},\qquad D^{(1)}= \frac{1}{12F} (F-1)^3,
    \label{prop00}
\end{equation}
while for even $F$:
\begin{equation}
    (M^k)_{ab}\delta_i = \frac{1}{2 \sqrt{\pi } \sqrt{D^{(2)}} \sqrt{k}}e^{-\frac{(v^{(2)} k+a-i)^2}{4 D_F^{(2)} k}} ,\qquad D^{(2)}= \frac{1}{12F} \left(F^3-3 F^2+6 F-4\right),
    \label{prop01}
\end{equation}
with $v^{(2)}=(1-F)/(2F)$.
\end{proposition}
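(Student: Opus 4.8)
The plan is to exploit that each layer is a convolution: $M$ is circulant, hence simultaneously diagonalized by the discrete Fourier transform, and the claimed Gaussian is then just the heat-kernel / local-central-limit-theorem limit of the $k$-fold self-convolution of the box window of $F$ ones (equivalently, of the law of the position after $k$ steps of a lattice random walk whose single step is uniform on the $F$ integers covered by the window, centered according to the stated parity convention).

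First I would pass to Fourier space. Writing $\omega=e^{2\pi i/L}$ and letting $\lambda_q=\sum_j c_j\omega^{qj}$ be the eigenvalues of $M$, where $(c_j)$ is the (appropriately normalized) first row of $M$, the identity $M^k=\Phi\,\mathrm{diag}(\lambda_q^k)\,\Phi^{-1}$ gives the exact formula
\[
(M^k\delta_i)_a=\frac{1}{L}\sum_{q=0}^{L-1}\lambda_q^k\,\omega^{q(a-i)},
\]
which in the large-$L$ limit becomes $\frac{1}{2\pi}\int_{-\pi}^{\pi}\widehat c(\theta)^k\,e^{i\theta(a-i)}\,d\theta$ with $\widehat c(\theta)=\sum_j c_j e^{i\theta j}$ the Fourier symbol of the filter.

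Next I would estimate this integral by Laplace's method. For $F\ge 3$ the box symbol satisfies $|\widehat c(\theta)|<\widehat c(0)$ strictly for every $\theta\neq 0$ in $(-\pi,\pi]$ (the ``aperiodicity'' of the underlying walk), so the contribution of $|\theta|\ge\epsilon$ is exponentially smaller in $k$ than that of the region near the origin, where $|\theta|$ is of order $k^{-1/2}$. There one expands
\[
\log\widehat c(\theta)=\log\widehat c(0)+i\mu\theta-\tfrac12\Sigma\,\theta^2+O(\theta^3),
\]
with $\mu$ the first moment and $\Sigma$ the variance of the normalized window weights: the symmetric centering for odd $F$ forces $\mu=0$, while the asymmetric even-$F$ centering makes $\mu\neq 0$, which is exactly what produces the traveling term $v^{(2)}k$. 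Since the cubic remainder is $o(1)$ on the scale $\theta=O(k^{-1/2})$, one has $\widehat c(\theta)^k\approx\widehat c(0)^k\,e^{ik\mu\theta-\frac12 k\Sigma\theta^2}$, and the resulting Gaussian Fourier integral gives
\[
(M^k\delta_i)_a\approx\frac{\widehat c(0)^k}{\sqrt{2\pi k\Sigma}}\;e^{-\frac{(a-i+k\mu)^2}{2k\Sigma}},
\]
which, once the norm-preserving normalization fixes $\widehat c(0)=1$ and one sets $\Sigma=2D^{(1)}$, $\mu=0$ for odd $F$ (resp.\ $\Sigma=2D^{(2)}$, $\mu=v^{(2)}$ for even $F$), is precisely \autoref{prop00} (resp.\ \autoref{prop01}). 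Substituting the explicit window weights into $\mu$ and $\Sigma$ then reproduces the stated closed forms for $D^{(1)}$, $D^{(2)}$ and $v^{(2)}$.

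The step I expect to be the main obstacle is making ``only the lowest Fourier modes survive'' quantitative: one must bound the off-origin tail $\sum_{|\theta|\ge\epsilon}|\lambda_q|^k$ and control the cubic-and-higher terms of $\log\widehat c$ uniformly across the transition region where $k^{-1/2}\ll|\theta|\ll 1$ and the quadratic term already supplies Gaussian decay, and one must verify that the wrap-around corrections of working on $\mathbb Z/L\mathbb Z$ rather than $\mathbb Z$ are negligible --- which needs the Gaussian width $\sqrt{k\Sigma}$ to be much smaller than $L$, i.e.\ the regime $1\ll k\ll L^2$ implicit in the ``large width and large depth'' hypothesis. This is exactly the content of a Gnedenko/Stone-type local limit theorem, so the cleanest route is probably to invoke such a theorem for the increment law of the box window and then merely compute its first two moments for each parity of $F$; a secondary, purely bookkeeping difficulty is to keep the two centering conventions straight so that the sign and magnitude of the drift and the precise diffusion constants come out as in the statement.
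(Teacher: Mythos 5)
Your proposal is correct in substance but takes a genuinely different route from the paper. The paper reads $M$ as the transition matrix of a lattice random walk, writes the master equation for $p^{(k)}_{a,i}$, and passes directly to the continuum Fokker--Planck (forward Kolmogorov) equation, whose Gaussian fundamental solution gives the claimed profile with drift and diffusion coefficients obtained from the first two moments of the jump distribution. You instead diagonalize the circulant matrix by the discrete Fourier transform and run a Laplace/local-central-limit-theorem argument on the symbol $\widehat c(\theta)^k$. The two derivations are the standard probabilistic and Fourier-analytic faces of the same diffusive limit; yours has the advantage of making quantitative exactly the steps the paper leaves heuristic (aperiodicity of the box window so that only $|\theta|\lesssim k^{-1/2}$ contributes, control of the cubic remainder of $\log\widehat c$, and the wrap-around condition $k\ll L^2$ on the torus $\mathbb Z/L\mathbb Z$), while the paper's route is shorter and makes the physical picture of \autoref{fig:2pixels_activations} immediate.

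One caveat: your final claim that computing the discrete moments of the window ``reproduces the stated closed forms'' is not literally true. The exact moments of the uniform law on $F$ lattice sites are $\Sigma=(F^2-1)/12$ for odd $F$ and $\mu=-1/2$, $\Sigma=(F^2-1)/12$ for the even-$F$ centering, whereas the constants in the statement, $D^{(1)}=(F-1)^3/(12F)$ and $v^{(2)}=(1-F)/(2F)$, arise because the paper evaluates $\int W_i(x)\,x\,dx$ and $\int W_i(x)\,x^2\,dx$ as integrals of a continuous uniform density over the window rather than as discrete sums (e.g.\ for $F=3$ one gets $2D^{(1)}=4/9$ versus the true variance $2/3$). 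The two agree only to leading order in large $F$; since only the $\sqrt{k}$ scaling of the width is used downstream, this does not affect the paper's conclusions, but your (correct) discrete computation would yield slightly different constants than those printed. You should also fix the sign convention relating the drift $\mu$ of the walk to the location $i\pm k\mu$ of the Gaussian peak, which is easy to get backwards in both formulations.
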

\begin{proof}
The matrix $M$ can be seen as the stochastic matrix of a Markov process, where at each step the random walker has uniform probability $1/F$ to move in a patch of width $F$ around itself. We write the following recursion relation for odd $F$,
\begin{equation}
p_{a,i}^{(k+1)} = \frac{1}{F}\left(p_{a-(F-1)/2,i}^{(k)}+...+p_{a,i}^{(k)}+...+p_{a+(F-1)/2,i}^{(k)}\right),
    \label{recnew0}
\end{equation}
and even $F$,
\begin{equation}
p_{a,i}^{(k+1)} = \frac{1}{F}\left(p_{a-F/2,i}^{(k)}+...+p_{a,i}^{(k)}+...+p_{a+(F/2-1),i}^{(k)}\right).
    \label{recnew1}
\end{equation}
In any of these two cases, this is the so-called master equation of the random walk ~\citep{Risken1996}. In the limit of large image width $L$ and large depth $\tilde{K}$, we can write the related equation for the continuous process $p_i(a,k)$, which is called Fokker-Planck equation in physics  and chemistry ~\citep{Risken1996} or forward Kolmogorov equation in mathematics ~\citep{Bremaud2000}, 
\begin{equation}
    \partial_k p_{a,i}^{(k)} = v \partial_a p_{a,i}^{(k)} + D \partial^2_a p_{a,i}^{(k)}.
    \label{fp}
\end{equation}
where the drift coefficient $v$ and the diffusion coefficient $D$ are defined in terms of the probability distribution $W_i(x)$ of having a jump $x$ starting from the location $i$
\begin{equation}
    v=\int dx W_i(x)x,\qquad D=\int dx W_i(x)x^2.
\end{equation}
In our case we have $W_i(x)=1/F$ for $x\in[i-(F-1)/2,i+(F-1)/2]$ for odd $F$ and $x\in[i-F/2,i+F/2-1]$ for even $F$, yielding the solutions for the Fokker-Planck equations for even and odd $F$ reported in \autoref{prop00} and \autoref{prop01}.

{We can better characterize the limits of large image width $L$ and large network depth $\tilde{K}$ as follows. The proof relies on the fact that a random walk, after a large number of steps, converges to a diffusion process. Here the number of steps is given by the depth $\tilde{K}$ of the network. Consequently, we need $\tilde{K}\gg1$. Moreover, we want that the diffusion process is not influenced by the boundaries of the image, of width $L$. The average path walked by the random walker after $\tilde{K}$ steps is given by $F\sqrt{K}$. Then, we require $F\sqrt{K}\ll L$.}

\end{proof}\newline

\section{Experimental setup}
\label{app:experiments}

All experiments are performed in PyTorch. The code with the instructions on how to reproduce experiments are found here:
\href{https://github.com/leonardopetrini/relativestability/blob/main/experiments_ICLR23.md}{github.com/leonardopetrini/relativestability/experiments\_ICLR23.md}.

\subsection{Deep networks training} 

In this section, we describe the experimental setup for the training of the deep networks deployed in Sections \ref{sec:intro}, \ref{sec:new_empirical} and \ref{sec:simple_model}.

For CIFAR10, fully connected networks are trained with the ADAM optimizer and $\text{learning rate} = 0.1$ while for CNNs SGD, $\text{learning rate} = 0.1$ and $\text{momentum} = 0.9$. In the latter case, the learning rate follows a cosine annealing scheduling. In all cases, the networks are trained on the cross-entropy loss, with a batch size of 128 and for 250 epochs. Early stopping at the best validation error is performed for selecting the networks to study. 
During training, we employ standard data augmentation consisting of random translations and horizontal flips of the input images.
On the scale-detection task, we perform SGD on the hinge loss and halve the learning rate to $0.05$. All results are averaged when training on 5 or more different networks initializations.

{For ImageNet, we used pretrained models from Pytorch, \texttt{torchvision.models}}.

\subsection{Simple CNNs training}

In this section we present the experimental setup for the training of simple CNNs introduced in \autoref{sec:theory}, whose sensitivities to diffeomorphisms and Gaussian noise are shown in \autoref{fig:sens_simple}. 

To learn task 1 we use CNNs with stride $s=1$ and filter size $F=3$. The width of the CNN is fixed to 1000 channels, while the depth to 12 layers. We use the Scale-Detection task in the version of \autoref{fig:2pixels_illustration} (b), with $\xi=11$ and gap $g=4$ and image size $L=32$. For the training, we use $P=48$ training points and Stochastic Gradient Descent (SGD) with learning rate 0.01 and batch size 8. We use weight decay for the $L_2$ norm of the filters weights with ridge 0.01. We stop the training after 500 times the interpolation time, which is the time required by the network to reach zero interpolation error of the training set. The goal of this procedure is to reach the solution with minimal norm. The generalization error of the trained CNNs is exactly zero: they learn spatial pooling perfectly. We show the sensitivities of the trained CNNs, averaged over 4 seeds, in the top panels of \autoref{fig:sens_simple}, where we also successfully test the predictions (\autoref{dft1_pred}, \autoref{gft1_pred}). 
We remark that to compute $G_{k}$ we inserted Gaussian noise with already the ReLU applied on, since we observe that without it we would see a pre-asymptotic behaviour for $G_{k}$ with respect to $A_k$. 

Task 2 is learned using CNNs with stride equal to filter size $s=F=2$. For the dataset, we use the block-wise version of the Scale-Detection task shown in \autoref{fig:2pixels_illustration} (c), fixing $\xi=2^5$ and $L=2^7$. We use 7 layers and 1000 channels for the CNNs. The training is performed using SGD and weight decay with the same parameters as in task 1, with $P=2^{10}$ training points. In the bottom panels of \autoref{fig:sens_simple} we show that the predictions (\autoref{dft2_pred}, \autoref{gft2_pred}) capture the experimental results, averaged over 10 seeds.  

{To support the assumption done in \autoref{sec:theory} that the trained CNNs are effectively behaving as one channel with homogeneous positive filters, we report the numerical values of the average filter over channels per layer in \autoref{tab:task1} for Task 1 and \autoref{tab:task2} for Task 2. They are positive in the first 9 hidden layers, where channel pooling is most pronounced.} 
\begin{table}[h]
    \centering
    \begin{tabular}{c|c|c}
        &Init. &  After training \\
        \hline
        $k=1$&$[ 0.0132,  0.0023, -0.0068]$&$[0.2928, 0.2605, 0.2928]$  \\
        $k=2$&$[ 0.0014, -0.0007, -0.0009]$&$[0.0039, 0.0035, 0.0039]$  \\
        $k=3$&$[-0.0006, -0.0001,  0.0010]$&$[0.0043, 0.0038, 0.0043]$  \\
        $k=4$&$[ 3.4610e-05,  6.5687e-04, -9.1634e-04]$&$[0.0039, 0.0033, 0.0038]$  \\
        $k=5$&$[-0.0006,  0.0002, -0.0009]$&$[0.0038, 0.0032, 0.0038]$  \\
        $k=6$&$[ 0.0012, -0.0011, -0.0003]$&$[0.0038, 0.0031, 0.0038]$ \\
        $k=7$&$[-0.0006,  0.0004,  0.0003]$&$[0.0041, 0.0032, 0.0040]$ \\
        $k=8$&$[ 0.0005, -0.0012,  0.0010]$&$[0.0036, 0.0024, 0.0035]$ \\
        $k=9$&$[ 0.0005, -0.0012,  0.0010]$&$[0.0021, 0.0016, 0.0017]$ \\
        $k=10$&$[-0.0025,  0.0015, -0.0006]$&$[-0.0013, -0.0008, -0.0010]$ \\
        $k=11$&$[-0.0006,  0.0005,  0.0009]$&$0.0002, 0.0002, 0.0002]$ \\
        $k=12$&$[3.3418e-04, 3.3521e-05, 1.3936e-03]$&$[0.0009, 0.0008, 0.0009]$ 
    \end{tabular}
    \caption{Average over channels of filters in layer $k$, before and after training, for simple CNNs with $s=1$ and $F=3$ trained on task 1. The network learns filters which are much more homogeneous than initialization.}
    \label{tab:task1}
\end{table}
\begin{table}[h]
    \centering
    \begin{tabular}{c|c|c}
        &Init. &  After training \\
        \hline
        $k=1$&$[-0.0559, -0.0291]$&$[0.3828, 0.3737]$  \\
        $k=2$&$[-0.0022,  0.0010]$&$[0.0060, 0.0059]$  \\
        $k=3$&$[ 0.0006, -0.0010]$&$[0.0064, 0.0065]$  \\
        $k=4$&$[-0.0020,  0.0009]$&$[0.0059, 0.0060]$  \\
        $k=5$&$[0.0002, 0.0008]$&$[9.9935e-05, 2.1380e-04]$  \\
        $k=6$&$[-0.0003, -0.0010]$&$[-0.0028, -0.0029]$ \\
        $k=7$&$[-7.4610e-04,  8.4595e-05]$&$[-0.0009, -0.0009]$ 
    \end{tabular}
    \caption{Average over channels of filters in layer $k$, before and after training, for simple CNNs with $s=F=2$ trained on task 2. The network learns filters which are much more homogeneous than initialization.}
    \label{tab:task2}
\end{table}

\newpage
\section{Additional figures and tables}
\label{app:additional}
\begin{figure}[ht]
\vskip 0.2in
\begin{center}
\centerline{\includegraphics[width=\columnwidth]{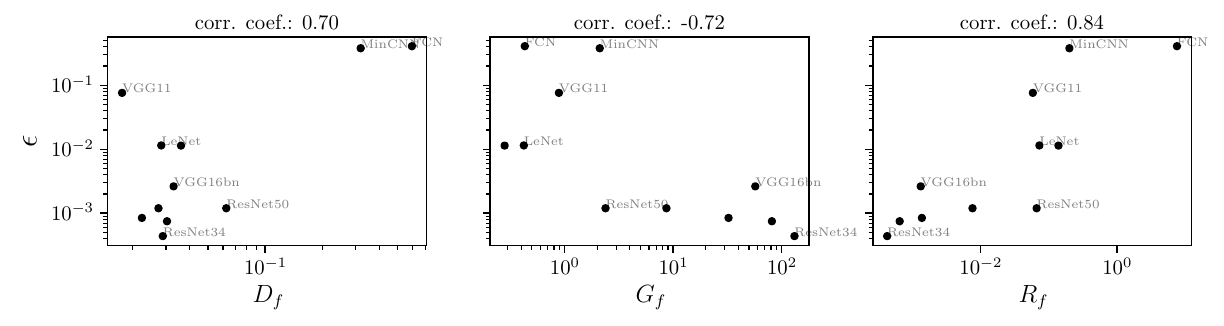}}
\caption{Generalization error $\epsilon$ versus sensitivity to diffeomorphisms $D_f$ (left), noise $G_f$ (center) and relative sensitivity $R_f$ (right) for a wide range of architectures trained on scale-detection task 1 (train set size: 1024, image size: 32, $\xi = 14, g=2$). As in real data, $\epsilon$ is positively correlated with $D_f$ and negatively correlated with $G_f$. The correlation is the strongest for the relative measure $R_f$.}
\label{fig:terr_Rf_twopixels}
\end{center}
\end{figure}

\begin{figure}[ht]
\vskip 0.2in
\begin{center}
\centerline{\includegraphics[width=\columnwidth]{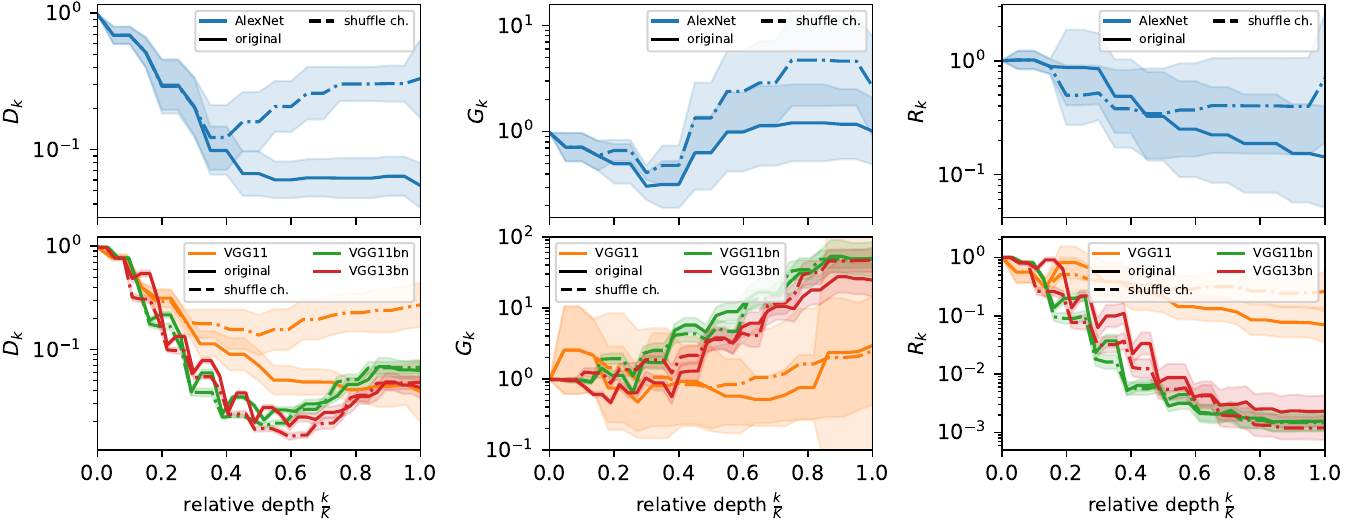}}
\caption{Sensitivities ($D_k$ left, $G_k$ middle and $R_k$ right) of the internal representations vs relative depth for AlexNet (\fst~row) and VGG networks (\snd~row) trained on scale-detection task 1. Dot-dashed lines show the sensitivities of networks with shuffled channels. }
\label{fig:Rf_depth_twopixels}
\end{center}
\end{figure}

\begin{figure}[ht]
    \centering
    \includegraphics[width=\textwidth]{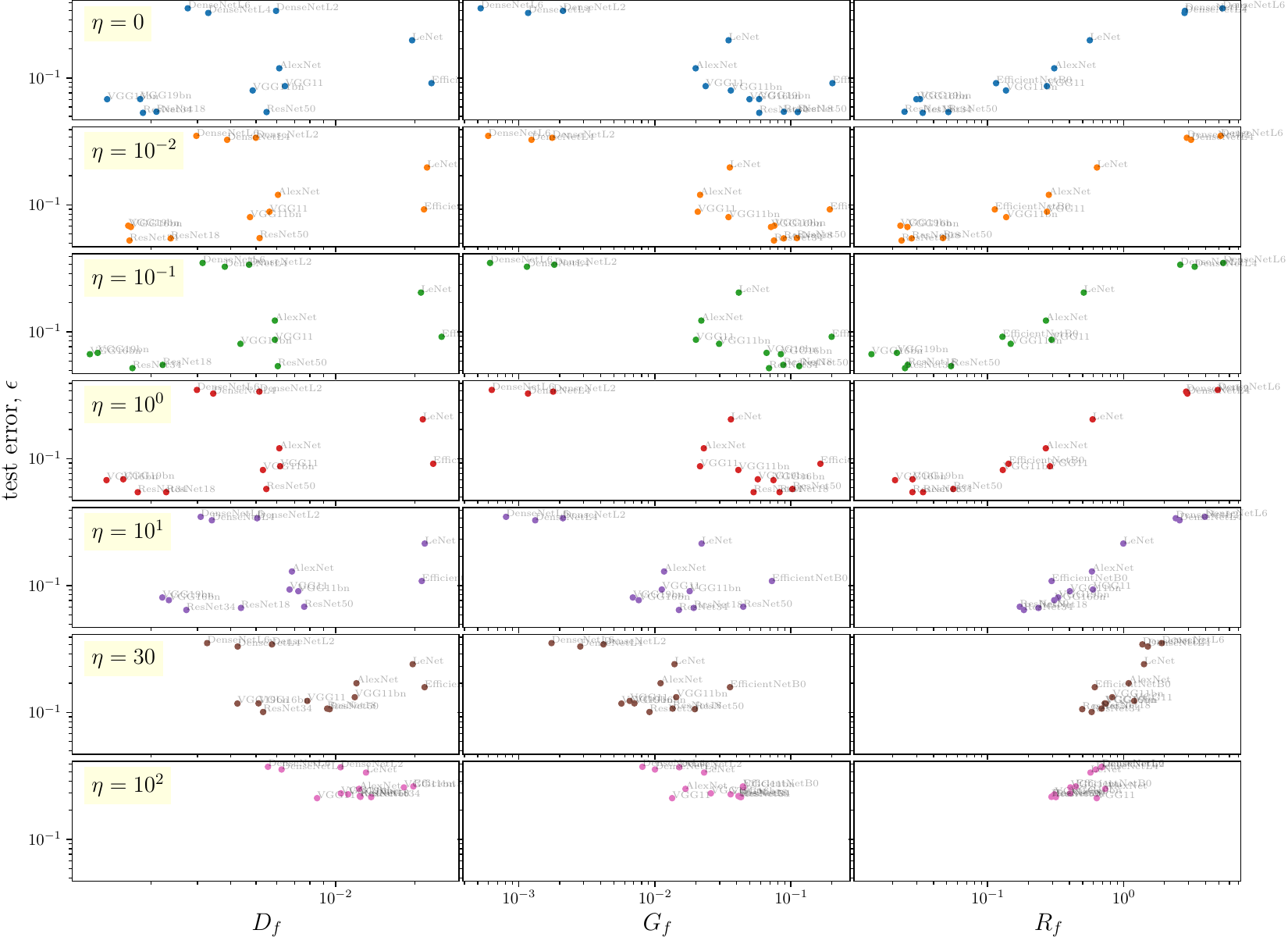}
    \caption{Test error vs. sensitivities (columns) when training on noisy CIFAR10. The different rows correspond to increasing noise magnitude $\eta$. Different points correspond to networks architectures, see gray labels. The content of this figure is also represented in compact form in \autoref{fig:corr_table}, right.}
    \label{fig:terr_Rf_noise_CIFAR}
\end{figure}


\begin{figure}
\begin{center}
\hspace*{-.35cm}
\centerline{\includegraphics[width=.97\columnwidth]{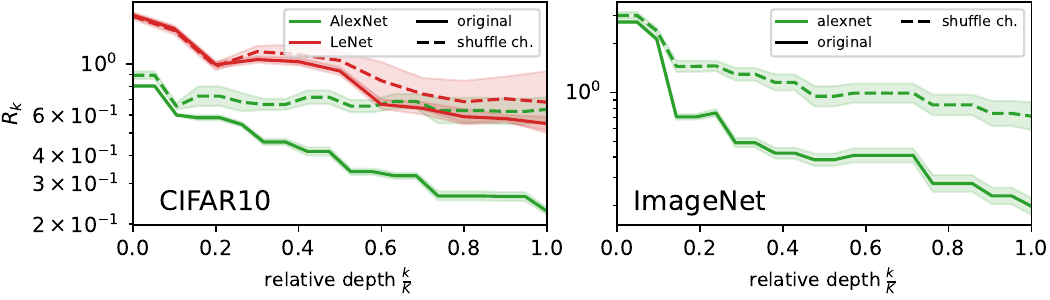}}
\caption{Analogous of \autoref{fig:Rf_depth} for different network architectures: relative sensitivity $R_k$ as a function of depth for LeNet and AlexNet architectures trained on CIFAR10 (left) and ImageNet (right). Full lines indicate experiments done on the original networks, dashed lines the ones after shuffling channels. $K$ indicates the networks total depth.
}
\label{fig:Rf_depth_alexnet}
\end{center}
\vskip -0.07in
\end{figure}

\begin{figure}[h]
    \centering
    \includegraphics[width=\textwidth]{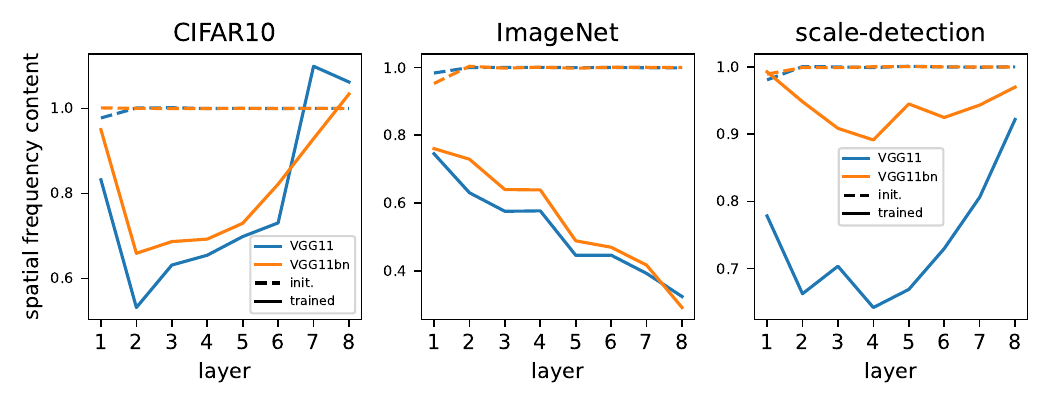}
    \caption{Spatial frequency content of filters for CIFAR10 (left), ImageNet (center) and the scale-detection task (right). The $y$-axis reports an aggregate measure among spatial frequencies: $N{(\sum_{i=1}^N \lambda_l)^{-1} \langle\|\bm{w}_c^k\|^2\rangle_c^{-1}}\sum_{l=1}^{F^2} \lambda_l \langle (\bm \Psi_l \cdot \bm{w}_c^k)^2 \rangle_{c} $, where $\bm\Psi_l$ are the $3\times3$ Laplacian eigenvectors and $\lambda_l$ the corresponding eigenvalues, $\bm{w}_c^k$ the $c$-th filter of layer $k$ and $\langle\cdot\rangle_c$ denotes the average over $c$. This is an aggregate measure over frequencies, the frequencies distribution is reported in the main text, \autoref{fig:filters_laplacian}.}
    \label{fig:spatial_freq_content}
\end{figure}

\begin{figure}
\begin{center}
\centerline{\includegraphics[width=\columnwidth]{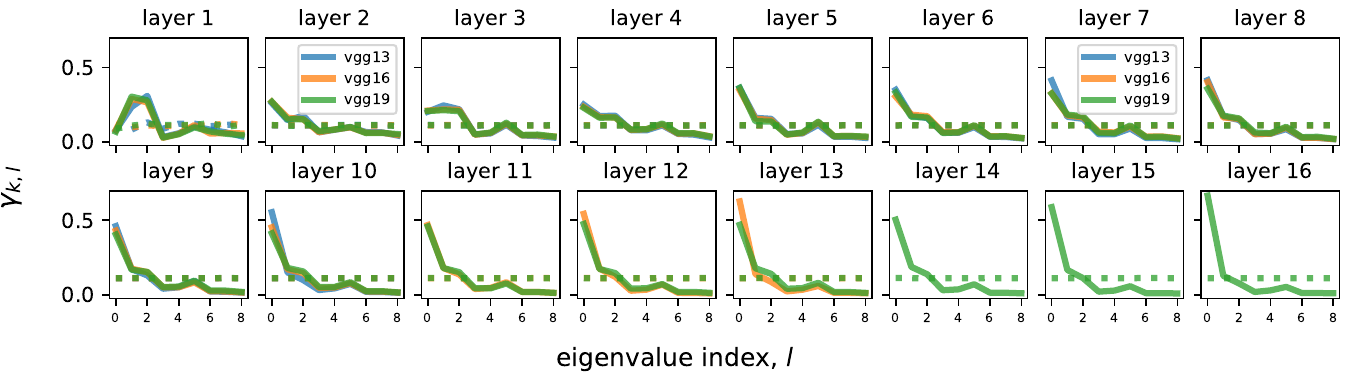}}
\caption{Analogous of \autoref{fig:filters_laplacian} for deep VGGs trained on ImageNet. Dotted and full lines respectively correspond to initialization and trained networks. The $x$-axis reports low to high frequencies from left to right. Deeper layers are reported in rightmost panels.}
\label{fig:filters_laplacian_vgg_imagenet}
\end{center}
\vskip -0.2in
\end{figure}

\begin{figure}
\begin{center}
\centerline{\includegraphics[width=\columnwidth]{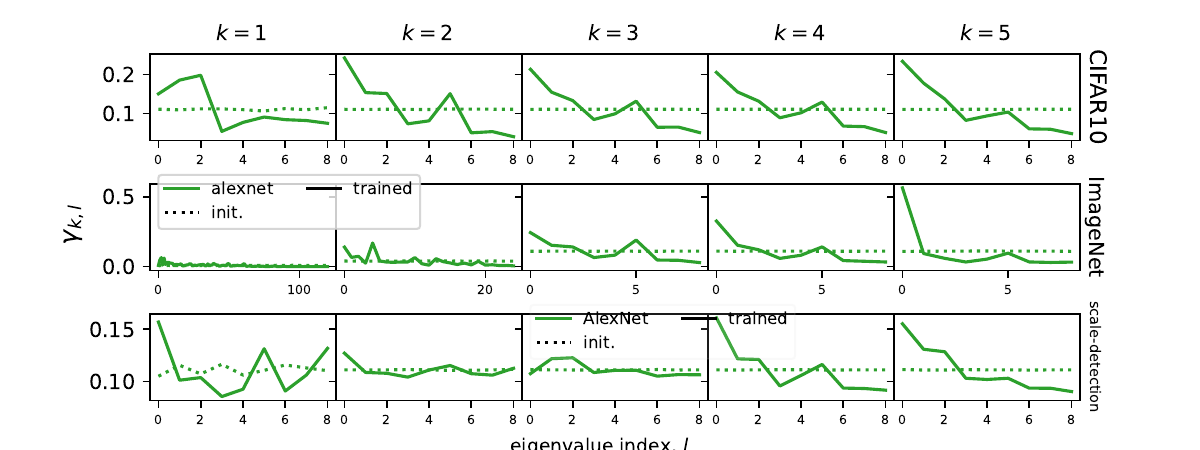}}
\caption{Analogous of \autoref{fig:filters_laplacian} for AlexNet trained on CIFAR10 (\fst~row), ImageNet (\snd row) and the scale detection task (\trd~row). Dotted and full lines respectively correspond to initialization and trained networks. The $x$-axis reports low to high frequencies from left to right. Deeper layers are reported in rightmost panels.}
\label{fig:filters_laplacian_alexnet}
\end{center}
\vskip -0.2in
\end{figure}


\end{document}